\renewenvironment{abstract}{%
  \global\setbox\absbox=\vtop\bgroup
  \hsize=\textwidth
  \noindent\unskip\ignorespaces
}{%
  \egroup
}
\renewcommand\MakeabstractBox{%
  {%
    \noindent\rule{\textwidth}{.2pt}\\[10pt]%
    \noindent\begin{tabular*}{\textwidth}{@{}l@{}p{\textwidth}}
      \multicolumn{2}{@{}l@{}}{A~B~S~T~R~A~C~T}\\[8pt]
      \hline\\[-8pt]
    \end{tabular*}%
    \par\noindent
    \unvbox\absbox
    \ifvoid\keybox\relax\else
      \par\vspace{10pt}%
      {\fontsize{8pt}{10pt}\selectfont
       \unhbox\keybox}%
    \fi
    \mbox{}\\[\belowabstractskip]%
    \noindent\hrule
    \if@twocolumn\vspace*{24pt}\fi
  }%
}
\newtheorem{proposition}{Proposition}[section]
\newtheorem{remark}{Remark}[section]
\crefname{theorem}{theorem}{theorems}
\Crefname{theorem}{Theorem}{Theorems}
\crefname{proposition}{proposition}{propositions}
\Crefname{proposition}{Proposition}{Propositions}
\crefname{assumption}{assumption}{assumptions}
\Crefname{assumption}{Assumption}{Assumptions}
\crefname{definition}{definition}{definitions}
\Crefname{definition}{Definition}{Definitions}
\crefname{corollary}{corollary}{corollaries}
\Crefname{corollary}{Corollary}{Corollaries}
\journal{}
\begin{document}

\begin{frontmatter}

\title{ProFlow: Zero-Shot Physics-Consistent Sampling via Proximal Flow Guidance}%

\author[1]{Zichao Yu\corref{eq1}}
\author[2]{Ming Li\corref{eq1}}
\author[3]{Wenyi Zhang}
\author[4]{Difan Zou}
\author[5]{Weiguo Gao\corref{cor1}}

\cortext[eq1]{Contributed equally}
\cortext[cor1]{Corresponding author: wggao@fudan.edu.cn}

\address[1]{University of Science and Technology of China}
\address[2]{School of Mathematical Sciences, Fudan University, Shanghai 200433, China}
\address[3]{Department of Electronic Engineering and Information Science, University of Science and Technology of China, Hefei, Anhui 230027, China}
\address[4]{School of Computing and Data Science, The University of Hong Kong}
\address[5]{School of Mathematical Sciences, Fudan University, Shanghai 200433, China \& Shanghai Key Laboratory of Contemporary Applied Mathematics, Shanghai 200433, China}


\begin{abstract}
Inferring physical fields from sparse observations while strictly satisfying partial differential equations (PDEs) is a fundamental challenge in computational physics. Recently, deep generative models offer powerful data-driven priors for such inverse problems, yet existing methods struggle to enforce hard physical constraints without costly retraining or disrupting the learned generative prior. Consequently, there is a critical need for a sampling mechanism that can reconcile strict physical consistency and observational fidelity with the statistical structure of the pre-trained prior. To this end, we present ProFlow, a proximal guidance framework for zero-shot physics-consistent sampling, defined as inferring solutions from sparse observations using a fixed generative prior without task-specific retraining. The algorithm employs a rigorous two-step scheme that alternates between: (\romannumeral1) a terminal optimization step, which projects the flow prediction onto the intersection of the physically and observationally consistent sets via proximal minimization; and (\romannumeral2) an interpolation step, which maps the refined state back to the generative trajectory to maintain consistency with the learned flow probability path. This procedure admits a Bayesian interpretation as a sequence of local maximum a posteriori (MAP) updates. Comprehensive benchmarks on Poisson, Helmholtz, Darcy, and viscous Burgers' equations demonstrate that ProFlow achieves superior physical and observational consistency, as well as more accurate distributional statistics, compared to state-of-the-art diffusion- and flow-based baselines.
\end{abstract}

\begin{keyword}
\KWD Generative modeling \sep Inverse problems \sep Bayesian inference \sep Constrained optimization \sep Sparse observations 
\end{keyword}

\end{frontmatter}

\section{Introduction}
\label{sec:introduction}

Many problems in computational physics involve inferring unknown fields that are constrained by partial differential equations (PDEs) and only partially observed in space or time~\citep{kaipio2005statistical,stuart2010inverse,tarantola2005inverse}. Examples include recovering material properties from sparse measurements~\citep{calderon2006inverse, sylvester1987global}, assimilating sensor data into a physical model~\citep{evensen2009data, stuart2015data}, or generating ensembles of plausible states for uncertainty quantification~\citep{smith2024uncertainty}. Bridging this gap between theoretical governing equations and limited empirical observations is fundamental to predictive modeling and scientific discovery.

Formally, we consider a family of PDEs posed on a bounded spatial domain \(\Omega \subset \mathbb{R}^d\). Let \(\bm u \colon \Omega \to \mathbb{R}^n\) denote the state field in a suitable function space \(\mathcal{U}\), and let \(\mathcal{L}\) be the differential operator encoding the governing equations. We frame the sampling task in a probabilistic setting where \(\mathcal{C}\) and \(\mathcal{O}\) denote specific events constraining the unknown field. First, let \(\mathcal{C}\) denote the \emph{physical consistency event} that the field satisfies the physical laws:
\begin{equation}
\mathcal{C} \coloneqq \{ \mathcal{L}(\bm u) = \bm 0 \}.
\end{equation}
Second, to account for partial and noisy information, we introduce an \emph{observation event} \(\mathcal{O}\). Let \(\mathcal{H}\) be a measurement operator and \(\bm y\) be the observed data. We postulate that the observation is generated via a Gaussian noise model:
\begin{equation}
\label{eq:observation_model}
\bm y = \mathcal{H}[\bm u] + \bm \xi, \quad \bm \xi \sim \mathcal{N}(\bm 0, \sigma_{\mathrm{obs}}^2 \bm I).
\end{equation}
We define \(\mathcal{O}\) as the event that the measurement \(\bm y\) is observed according to this model. The goal is to sample from the posterior distribution conditioned on the joint occurrence of these events:
\begin{equation}
\label{eq:posterior}
p(\bm u \mid \mathcal{C}, \mathcal{O}) \propto p(\bm u) p(\mathcal{O} \mid \bm u) \bm{1}_{\mathcal{C}}(\bm u),
\end{equation}
Here, \(p(\mathcal{O} \mid \bm u)\) is the likelihood induced by the observation model, \(\bm{1}_{\mathcal{C}}\) enforces the physical constraint, and \(p(\bm u)\) is the generative prior which we will specify later. This framework encompasses a wide spectrum of problems depending on the nature of \(\mathcal{H}\). For forward problems, \(\mathcal{H}\) restricts the solution space to those satisfying prescribed initial and boundary conditions, whereas for inverse problems, \(\mathcal{H}\) encodes sparse observations from which the physical parameters must be inferred.

Classically, these problems are addressed using Finite Difference Method (FDM)~\citep{leveque2007finite}, Finite Element Method (FEM)~\citep{brenner2008mathematical}, Finite Volume Method (FVM)~\citep{eymard2000finite} or spectral method~\citep{gottlieb1977numerical}. Although highly accurate, these traditional approaches require careful design of numerical schemes and are computationally expensive in high dimensions. Consequently, deep learning based solvers have emerged as a viable alternative, ranging from Physics-Informed Neural Networks (PINNs)~\citep{raissi2019physics} to operator learners like DeepONet~\citep{lu2021learning} and Fourier Neural Operators (FNO)~\citep{li2021fourier}. Despite their utility, these methods often suffer from optimization challenges and limited expressiveness. This has prompted a pivot toward generative paradigms where diffusion models~\citep{ho2020denoising,song2021score} and flow matching models~\citep{lipman2023flow,liu2023flow}. They have been used to synthesize realistic images and videos~\citep{dhariwal2021diffusion,esser2024scaling,gao2026terminally,kong2024hunyuanvideo,wan2025wan}, design molecules and materials~\citep{corso2023diffdock,jumper2021highly,nam2024flow}, and emulate high dimensional climate and weather fields~\citep{landry2025generating,utz2025climate}, among many other examples. In the realm of PDEs, generative models can learn complex distributions and perform \emph{zero-shot}\footnote{In this context, zero-shot refers to satisfying novel observation constraints or boundary conditions at sampling time without retraining the pretrained model (i.e., the generative model that approximates the prior distribution on the physically consistent set).} controllable generation tasks in the sampling process~\citep{chung2023diffusion,martin2025pnp,yu2025tree,zhang2025physics}. For example, DiffusionPDE~\citep{huang2024diffusionpde} leverages the gradients of the PDE residual to guide the diffusion sampling process, while D-Flow~\citep{ben2024d} optimizes the initial noise for better alignment with PDE constraints. Conversely, the ECI framework~\citep{cheng2025gradientfree} and physics-constrained flow matching models~\citep{utkarsh2025physics} circumvent the high cost of backpropagating through solvers by using manifold projections or linear corrections to align the generative trajectory with physical constraints. However, these approaches face common limitations. First, they typically enforce physical validity only as soft constraints, often via penalty terms or approximate guidance. As a result, the generated solutions are not guaranteed to strictly satisfy the governing PDE, leading to potential violations of conservation laws or physical consistency. Second, they often lack a clear derivation from a principled Bayesian formulation. Instead of following a strict inference procedure, current approaches largely rely on heuristic trajectory modifications, such as ad hoc projections or gradient penalties, to enforce consistency. Consequently, these adjustments are not guaranteed to target the true posterior distribution \(p(\bm u \mid \mathcal{C}, \mathcal{O})\). Without this explicit link to a well-defined conditional density, it is difficult to ensure that the generated samples statistically represent the true uncertainty of the solution.

To resolve the aforementioned limitations, we propose \textbf{ProFlow}, a proximal guidance scheme for zero-shot physics-consistent sampling with pretrained generative priors. We build upon the framework of Functional Flow Matching (FFM)~\citep{kerrigan2024functional} as a generative prior because it fundamentally treats the data as continuous fields rather than discrete vectors. Specifically, FFM learns a continuous flow on the function space \(\mathcal{U}\) that transports a simple reference measure \(\mu_0\) (typically a Gaussian Random Field) to the target distribution \(\mu_1\) supported on the physically consistent set. This is achieved by training a neural network to approximate the velocity field of a conditional probability path, which is typically defined as a linear interpolation between the reference and target samples. The terminal distribution \(\mu_1\) thus acts as a learned prior density \(p(\bm u)\) over the admissible functions in \(\mathcal{U}\). Conditioning this prior on the physics and observation constraints yields the target posterior~\eqref{eq:posterior}. As illustrated in~\cref{fig:conceptual_illustration}, ProFlow modifies the sampling trajectory to sample from the constrained posterior. At each timestep \(t_n\) (with \(n\) indexing the discretization steps), the algorithm iterates through two complementary steps: (\romannumeral1) a terminal optimization step where the FFM model predicts a candidate terminal state \(\hat{\bm{u}}_1 = \bm{u}_t + (1-t)\bm{v}_\theta(\bm{u}_t, t)\). This candidate is then refined into a valid solution \(\bm{u}_1\) by solving a proximal optimization problem:
\begin{equation}
\bm{u}_1 = \operatorname*{arg min}_{\bm{u} \in \mathcal{C}} \|\bm{u} - \hat{\bm{u}}_1\|_2^2 + \lambda \|\mathcal{H}[\bm{u}] - \bm{y}\|_2^2.
\end{equation}
This step enforces the observation operator \(\mathcal{H}\) and the constraints \(\mathcal{C}\) while keeping the solution anchored to the generative prior's prediction; and (\romannumeral2) an interpolation step where the refined state \(\bm{u}_1\) is then mapped back to the next intermediate time \(t_{n+1}\) via linear interpolation with the freshly drawn noise \(\bm{\varepsilon}\):
\begin{equation}
\bm{u}_{t_{n+1}} = (1 - t_{n+1})\bm{\varepsilon} + t_{n+1}\bm{u}_1.
\end{equation}
This ensures the update step remains consistent with the linear probability path (the ``straight line'' bridge) assumed during the FFM training. By iterating these steps, ProFlow provides a rigorous mechanism to enforce both physical and observational constraints while preserving the statistical structure of the learned prior.

\begin{figure}[t]
\centering
\includegraphics[width=0.9\linewidth]{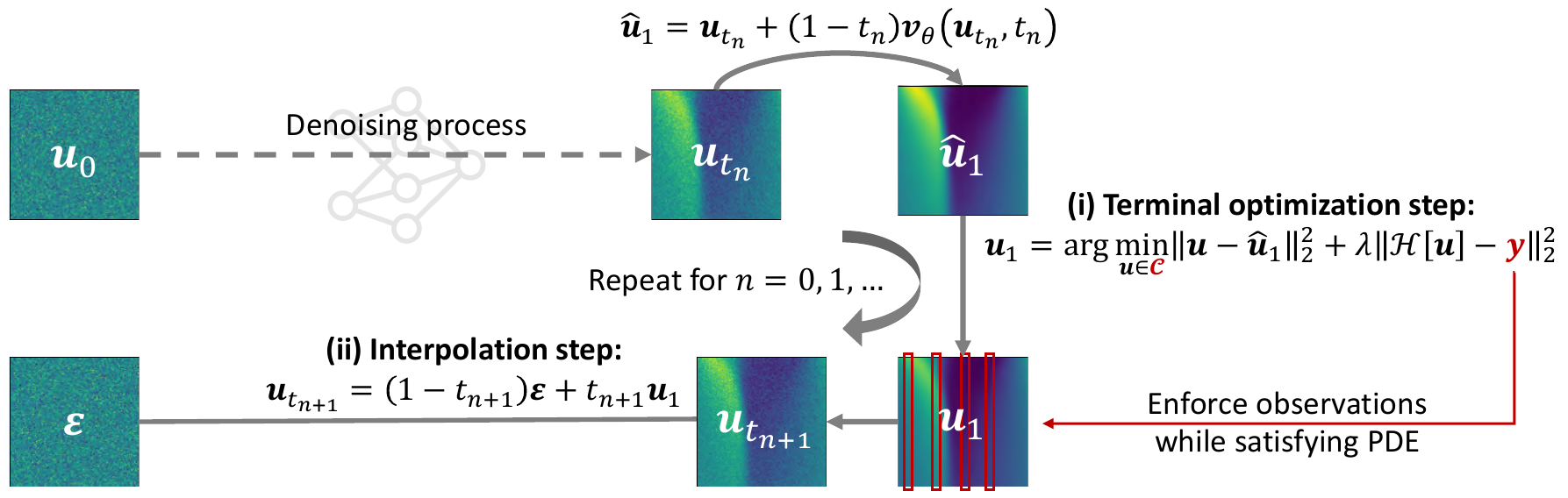}
\caption{A conceptual illustration of ProFlow. The algorithm alternates between (\romannumeral1) a terminal optimization step, which projects the model's predicted solution \(\hat{\bm{u}}_1\) onto the physically and observationally consistent set by solving a proximal optimization problem; and (\romannumeral2) an interpolation step, which constructs the next state \(\bm{u}_{t_{n+1}}\) by linearly blending the refined solution \(\bm{u}_1\) with the freshly drawn noise \(\bm{\varepsilon}\), adhering to the flow matching probability path.}
\label{fig:conceptual_illustration}
\end{figure}

The remainder of the paper is organized as follows. \Cref{sec:related_work} reviews related developments in physics informed and generative modeling for PDEs. \Cref{sec:ffm_for_pdes} provides necessary background on the functional flow matching prior used in this work. \Cref{sec:method} presents the probabilistic formulation, derives the proximal guidance scheme, and describes the ProFlow algorithm. \Cref{sec:experiments} reports numerical results on elliptic and time dependent PDE benchmarks and compares ProFlow with representative baselines. Finally, \Cref{sec:conclusion} concludes with a summary and perspectives for future work.

\section{Related works and our contributions}
\label{sec:related_work}

\subsection{Classical numerical solvers}

Partial differential equations (PDEs) serve as the fundamental mathematical framework for describing physical systems across space and time. However, practical governing equations rarely admit closed-form analytical solutions. Consequently, scientific computing has long relied on numerical discretization schemes such as the Finite Difference Method (FDM)~\citep{leveque2007finite}, Finite Element Method (FEM)~\citep{brenner2008mathematical}, Finite Volume Method (FVM)~\citep{eymard2000finite}, and spectral methods~\citep{gottlieb1977numerical}, to approximate solutions. These classical solvers operate by discretizing the continuous domain into meshes or grids, converting differential operators into systems of algebraic equations that can be solved iteratively. While mathematically rigorous and ubiquitous in engineering, these methods face great challenges at fine resolutions or in high-dimensional settings due to prohibitive computational costs. Additionally, they typically require substantial problem-specific engineering, such as complex mesh generation, to accommodate intricate geometries. Furthermore, their high computational overhead makes them ill-suited for many-query tasks like inverse analysis or uncertainty quantification, motivating the recent rapid shift toward deep learning-based approaches.

\subsection{Physics-informed deep learning}

Motivated by the computational bottlenecks of classical methods, recent research has shifted toward deep learning-based solvers. Pioneering this direction, Physics-Informed Neural Networks (PINNs)~\citep{raissi2019physics} use coordinate-based neural networks that embed governing equations directly into the training objective via PDE residuals and boundary penalties. This formulation allows for the learning of continuous solution fields in a mesh-free manner, enabling data-efficient modeling even when observations are sparse. Consequently, PINNs have been successfully adapted to a wide variety of physical domains including fluid dynamics~\citep{cai2021physics,mao2020physics}, Reynolds-averaged Navier--Stokes modeling~\citep{eivazi2022physics}, and heat transfer~\citep{cai2021physicsheat}. Despite their flexibility, PINNs face scalability challenges since each new problem instance or parameter configuration requires solving a high-dimensional non-convex optimization problem from scratch. This iterative training burden becomes increasingly prohibitive for long-horizon dynamics or stiff differential equations, limiting their utility for real-time applications.

\subsection{Neural operator learning}

To address the single-instance limitation of PINNs, operator-learning frameworks were introduced to approximate mappings between infinite-dimensional function spaces. Seminal architectures such as the Fourier Neural Operator (FNO)~\citep{li2021fourier} and DeepONet~\citep{lu2021learning} learn solution operators that generalize across variable inputs, allowing for rapid evaluation without retraining. This paradigm has been significantly expanded to handle complex physical and geometric complexities. For instance, recent works utilize learned coordinate deformations to extend FNOs to general, irregular geometries~\citep{li2023fourier}. Furthermore, the scope of operator learning has moved beyond standard differential equations to more complex integral formulations. Notably, Bassi et al.~\citep{bassi2024learning} employ recurrent architectures to solve integro-differential equations, while Zhu et al.~\citep{zhu2025predicting} apply nonlinear integral operator learning to predict nonequilibrium Green's function dynamics in quantum many-body systems. Finally, to improve physical fidelity and handle inverse problems, frameworks like PINO~\citep{li2024physics} and iFNO~\citep{long2025invertible} integrate physical constraints or invertibility directly into the operator architecture, bridging the gap between data-driven approximation and physical consistency.

\subsection{Generative modeling for PDEs}

While neural operator learning significantly accelerates forward simulations, these methods are typically deterministic and lack mechanisms to naturally represent uncertainty or assimilate partial observations. To mitigate this, recent work has explored generative priors, ranging from graph-based models~\citep{iakovlev2021learning,zhao2022learning} to deep diffusion processes, to model distributions of admissible fields. Current approaches generally fall into three categories. The first category is training-time regularization, which embeds physics directly into the generative objective function (e.g., Physics-Informed Diffusion Models~\citep{bastek2025physics}, Physics-Based Flow Matching~\citep{baldan2025flow}). While effective, these models cannot adapt to new boundary conditions without retraining. Alternatively, fine-tuning approaches such as Physics-Informed Reward Fine-tuning~\citep{yuan2025pirf} adapt model weights to new observations. However, this process is computationally expensive and risks catastrophic forgetting, where the model deviates from the original learned distribution. The third category, sampling-time guidance, avoids retraining by using fixed priors to enforce constraints during sampling. Our proposed method belongs to this category. Existing works in this area differ by their enforcement mechanism. For example, DiffusionPDE~\citep{huang2024diffusionpde} applies soft, gradient-based guidance using PDE residuals. While flexible, it cannot guarantee strict physical feasibility. Conversely, ECI~\citep{cheng2025gradientfree} and PCFM~\citep{utkarsh2025physics} employ hard projections. ECI is restricted to linear constraints, while PCFM utilizes Gauss--Newton updates. Despite their success, these methods face a fundamental trade-off. Soft guidance often permits violations of physical laws, whereas hard projections can create inconsistencies with the generative probability path, reducing the statistical quality of the samples. Furthermore, many existing approaches prioritize empirical adjustments to the sampling trajectory. Consequently, deriving these methods from a unified probabilistic sampling objective to establish theoretical guarantees remains an open challenge.

\subsection{Contributions of this work}

To address these challenges, we introduce \textbf{ProFlow}, which resolves the tension between soft guidance and hard projection by unifying strict physical constraints with generative fidelity. We frame the problem within a principled Bayesian formulation, deriving a conditional law factorization that yields two interpretable updates: a proximal terminal refinement that enforces physics and data constraints, and a flow-consistent interpolation that restores proximity to the pretrained generative manifold. This approach unifies guidance and projection methods under a theoretically grounded, zero-shot sampling framework. We highlight the key distinctions between ProFlow and existing methods in~\Cref{tab:comp}. The main contributions of this work are as follows:
\begin{itemize}
\item A Bayesian formulation of zero-shot physics-informed sampling with flow matching priors, expressed as a constrained posterior \(p(\bm u \mid \mathcal{C}, \mathcal{O})\) and a conditional law factorization that motivates a two-stage guidance scheme.
\item A terminal proximal optimization step that admits an interpretation as a local MAP estimate for an approximate conditional model, balancing proximity to the flow prediction with adherence to physics and data consistency in a single optimization problem.
\item An interpolation step that restores compatibility with the learned FFM bridges after each proximal update, ensuring the pretrained velocity field operates within its valid domain rather than correcting off-manifold states.
\item A comprehensive numerical study on Poisson, Helmholtz, and Darcy flow benchmarks and viscous Burgers trajectories, demonstrating that ProFlow achieves physical and observational consistency, as well as more accurate distributional statistics compared with baseline methods.
\end{itemize}

\begin{table}[t]
\centering
\caption{Comparison of generative methods incorporating physical or observational constraints. Columns indicate: zero-shot (sampling without retraining), observation/physics (ability to enforce respective constraints), prior (preservation of the pretrained generative manifold statistics), and enforcement type (mechanism of constraint handling).}
\vspace{0.3em}
\small
\begin{tabular}{lccccl}
\toprule
\textbf{Method} &
\textbf{Zero-shot} &
\textbf{Observation} &
\textbf{Physics} & \textbf{Prior} & \textbf{Enforcement type}\\
\midrule
FFM~\citep{kerrigan2024functional} & \(\times\) & \(\times\) & \(\times\) & \(\checkmark\) & N/A\\
DiffusionPDE~\citep{huang2024diffusionpde} & \(\checkmark\) & \(\checkmark\) & \(\checkmark\) & \(\checkmark\) & Soft (Gradient guidance) \\
D-Flow~\citep{ben2024d} & \(\checkmark\) & \(\checkmark\) & \(\checkmark\) & \(\checkmark\) & Soft (Initial noise optimization) \\
ECI~\citep{cheng2025gradientfree} & \(\checkmark\) & \(\checkmark\) & \(\times\) & \(\times\) & Hard (Linear projection) \\
PCFM~\citep{utkarsh2025physics} & \(\checkmark\) & \(\checkmark\) & \(\checkmark\) & \(\times\) & Hard (Gauss--Newton projection) \\
\midrule
\textbf{ProFlow (Ours)} & \(\checkmark\) & \(\checkmark\) & \(\checkmark\) & \(\checkmark\) & \textbf{Hard (Proximal optimization)} \\
\bottomrule
\end{tabular}
\label{tab:comp}
\end{table}

\section{Functional flow matching prior for PDEs}
\label{sec:ffm_for_pdes}

In zero-shot physics-informed sampling, we aim to generate fields that satisfy the PDE and agree with sparse observations, without requiring paired supervision. Instead of a deterministic solver, we learn a \emph{generative prior} over physically consistent fields. This approach yields diverse, approximately plausible samples that we can then guide to enforce the PDE or match new observations.

To construct this prior, we adopt Functional Flow Matching (FFM)~\citep{kerrigan2024functional}. FFM learns a deterministic continuous transformation that smoothly morphs random functions into physical solutions. Formally, let \(\mathcal{U}\) be a real separable Hilbert space of functions \(\bm u\colon \Omega \to \mathbb{R}^n\) equipped with the Borel \(\sigma\)-algebra \(\mathcal{B}(\mathcal{U})\). FFM defines a flow \(\psi_t\colon \mathcal{U}\to\mathcal{U}\) where \(t\) is a homotopy variable (not physical time). The generation process starts by sampling a function \(\bm u_0\) from a simple reference distribution \(\mu_0\) (typically a Gaussian Random Field) and evolving it according to the ODE:
\begin{equation}
\label{eq:ffm_ode}
\dfrac{\mathrm d}{\mathrm dt} \psi_t(\bm u_0) = \bm v_\theta(\psi_t(\bm u_0), t), \quad \psi_0(\bm u_0) = \bm u_0,
\end{equation}
where \(\bm v_\theta\) is a learnable velocity field neural network (with \(\theta\) denoting the network parameters).

A key advantage of FFM is \emph{simulation-free} training. We do not need to solve the ODE to train the model. Instead, we define a target probability path that interpolates linearly between the prior \(\mu_0\) and the data \(\mu_1\). The target velocity for any sample pair is simply the straight-line vector \(\bm u_1 - \bm u_0\). We train \(\bm v_\theta\) to match this vector field by minimizing
\begin{equation}
L_{\mathrm{FFM}}(\theta)
= 
\mathbb{E}_{t\sim U(0,1), \bm u_0\sim\mu_0, \bm u_1\sim\mu_1}
\bigl[
\|\bm v_\theta(\bm u_t, t) - (\bm u_1 - \bm u_0)\|_2^2
\bigr],
\end{equation}
where \(\bm u_t = (1-t)\bm u_0 + t\bm u_1\). At sampling time, we generate novel fields by sampling \(\bm u_0 \sim \mu_0\) and numerically integrating~\eqref{eq:ffm_ode} to \(t=1\). Because this prior is learned from data, generated samples are only approximately physically consistent. At sampling time, we guide the generative trajectory using both the physical constraint \(\mathcal{L}(\bm u)=0\) and the available observations, thereby enforcing PDE consistency while conditioning on data.

\section{Method}
\label{sec:method}

\subsection{Problem statement}

We build upon the probabilistic formulation introduced in~\Cref{sec:introduction}. Let \(\mu_1\) denote the distribution induced by the pretrained FFM model at time \(t=1\), which serves as our generative prior \(p(\bm u)\). We seek to sample from the posterior distribution conditioned on the physical consistency event \(\mathcal{C}\) and the observation event \(\mathcal{O}\). Recalling the decomposition in~\eqref{eq:posterior}, the target density is
\begin{equation}
p(\bm u \mid \mathcal{C}, \mathcal{O}) \propto p(\bm u) p(\mathcal{O} \mid \bm u)  \bm{1}_{\mathcal{C}}(\bm u),
\end{equation}
where \(p(\mathcal{O} \mid \bm u)\) is the likelihood induced by the Gaussian noise postulate~\eqref{eq:observation_model} and \(\bm{1}_{\mathcal{C}}\) is the indicator function for the PDE constraint. Exact sampling from this posterior is intractable due to the implicit nature of the prior \(p(\bm u)\). Consequently, as detailed in \Cref{sec:derivation_proflow}, we approximate the sampling trajectory via a sequence of local maximum a posteriori (MAP) updates. These updates explicitly balance the flow-based prior, the data likelihood, and the hard physical constraints at each timestep.

\begin{remark}[Extension to time-dependent problems]
\label{rem:time_dependent_extension}
While the notation in this section describes stationary fields for clarity, the framework extends naturally to time-dependent settings. For evolution equations, such as the Burgers' equation, the variable \(\bm u\) represents the entire solution trajectory over the space-time domain \(\Omega \times [0,T]\). Consequently, the physical consistency event \(\mathcal{C}\) encapsulates the time-evolution residual, while the observation event \(\mathcal{O}\) generalizes to measurements distributed across space-time coordinates (e.g., sparse sensors or temporal snapshots). This allows the posterior factorization and sampling procedure to remain structurally identical to the stationary case.
\end{remark}

\subsection{Derivation of the proximal guidance scheme}
\label{sec:derivation_proflow}

Our goal is to construct a sampling algorithm that approximates the constrained posterior \(p(\bm u \mid \mathcal{C}, \mathcal{O})\) via a sequence of tractable local transitions. As described in~\Cref{sec:ffm_for_pdes}, the pretrained flow matching model defines a homotopy time variable \(t \in [0,1]\) that interpolates between a reference distribution \(\mu_0\) at \(t = 0\) and the prior \(\mu_1\) at \(t = 1\). We write \(\bm u_t\) for the state at homotopy time \(t\).

Conceptually, conditioning the terminal distribution on the constraints defines a posterior path \(\{p(\bm u_t \mid \mathcal{C}, \mathcal{O})\}_{t \in [0,1]}\) obtained by transporting the constrained terminal law backward along the learned flow. Suppose that at some time \(t\) we have access to a draw \(\bm u_t \sim p(\bm u_t \mid \mathcal{C}, \mathcal{O})\). For any later time \(s \in (t,1]\) the corresponding marginal can be written by conditioning on intermediate states, as shown in~\Cref{prop:conditional_law_factorization}.

\begin{proposition}[Conditional law factorization]
\label{prop:conditional_law_factorization}
Let \(0 \leq t < s \leq 1\) and let \(\mathcal{C}\) and \(\mathcal{O}\) be the physical consistency event and observation event, respectively. Assume that under the conditional measure \(\mathbb{P}(\cdot \mid \mathcal{C}, \mathcal{O})\) the joint law of \((\bm u_t, \bm u_s, \bm u_1)\) admits conditional densities. Then
\begin{equation}
p(\bm u_s \mid \mathcal{C}, \mathcal{O})
=
\iint
p(\bm u_s \mid \bm u_1, \bm u_t, \mathcal{C}, \mathcal{O})
 p(\bm u_1 \mid \bm u_t, \mathcal{C}, \mathcal{O})
 p(\bm u_t \mid \mathcal{C}, \mathcal{O})
 \mathrm d\bm u_1 \mathrm d\bm u_t .
\end{equation}
\end{proposition}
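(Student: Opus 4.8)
The plan is to prove this as a straightforward application of the law of total probability (marginalization) combined with the chain rule for conditional densities, all carried out inside the conditional measure $\mathbb{P}(\cdot \mid \mathcal{C}, \mathcal{O})$. The entire statement lives under this fixed conditioning, so the cleanest way to organize the argument is to treat $\mathbb{P}(\cdot \mid \mathcal{C}, \mathcal{O})$ as the ambient probability measure and work with the joint density $p(\bm u_t, \bm u_s, \bm u_1 \mid \mathcal{C}, \mathcal{O})$, whose existence is precisely what the hypothesis grants.

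First I would write the marginal of $\bm u_s$ as an integral of the joint density over the other two coordinates:
\begin{equation}
p(\bm u_s \mid \mathcal{C}, \mathcal{O}) = \iint p(\bm u_s, \bm u_1, \bm u_t \mid \mathcal{C}, \mathcal{O})\, \mathrm d \bm u_1 \, \mathrm d \bm u_t .
\end{equation}
Next I would factor the integrand by two successive applications of the definition of conditional density (the chain rule), choosing the order of conditioning so that the most recently ``revealed'' variable is $\bm u_t$, then $\bm u_1$, then $\bm u_s$:
\begin{equation}
p(\bm u_s, \bm u_1, \bm u_t \mid \mathcal{C}, \mathcal{O}) = p(\bm u_s \mid \bm u_1, \bm u_t, \mathcal{C}, \mathcal{O})\, p(\bm u_1 \mid \bm u_t, \mathcal{C}, \mathcal{O})\, p(\bm u_t \mid \mathcal{C}, \mathcal{O}).
\end{equation}
Substituting this factorization back into the marginalization integral yields the claimed identity verbatim. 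I would remark that no Markov or independence assumption is used: the factorization is exact for any joint law admitting the stated conditional densities, and the ordering $t < s \le 1$ plays no role in the algebra itself — it only matters for the downstream interpretation as a forward-in-homotopy-time transition.

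The only genuine subtlety — and the step I expect to require the most care — is justifying that each conditional density appearing in the factorization is well defined, i.e.\ that the relevant marginals are strictly positive (or more precisely, that one restricts to the set where denominators are nonzero, which carries full mass) and that Fubini's theorem applies so the iterated integral legitimately computes the marginal. In an infinite-dimensional function-space setting this is not entirely cosmetic, since ``densities'' must be understood with respect to a fixed reference measure on $\mathcal{U}$ and the conditioning event $\mathcal{C} = \{\mathcal{L}(\bm u) = \bm 0\}$ is itself measure-zero under the unconstrained prior; however, the proposition sidesteps this by \emph{assuming outright} that the joint law under $\mathbb{P}(\cdot \mid \mathcal{C}, \mathcal{O})$ admits conditional densities, so for the purposes of this proof I would simply invoke that hypothesis and the standard disintegration theorem for Polish spaces to guarantee the conditional densities exist and satisfy the chain rule $\mathbb{P}(\cdot\mid\mathcal{C},\mathcal{O})$-almost surely. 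The proof is therefore short: state the marginalization, apply the chain rule twice, substitute, and note the almost-sure qualifier inherited from disintegration.
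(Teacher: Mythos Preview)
Your proposal is correct and follows essentially the same approach as the paper: marginalize the joint density over $\bm u_1$ and $\bm u_t$, then apply the chain rule twice to factor the integrand in the stated order, and substitute. The paper's proof is slightly terser (it simply abbreviates $\mathcal{E} = \mathcal{C} \cap \mathcal{O}$ and omits your remarks on disintegration and Fubini), but the logical content is identical.
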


\begin{proof}
Let \(\mathcal E = \mathcal{C} \cap \mathcal{O}\) denote the joint event of satisfying physical and observational constraints. We start with the marginal density \(p(\bm u_s \mid \mathcal E)\), which can be obtained by integrating the joint density \(p(\bm u_s, \bm u_1, \bm u_t \mid \mathcal E)\) over the variables \(\bm u_1\) and \(\bm u_t\):
\begin{equation}
p(\bm u_s \mid \mathcal E) = \iint p(\bm u_s, \bm u_1, \bm u_t \mid \mathcal E) \mathrm{d}\bm u_1 \mathrm{d}\bm u_t.
\end{equation}
By the chain rule of probability, the joint density admits the factorization
\begin{equation}
p(\bm u_s, \bm u_1, \bm u_t \mid \mathcal E) = p(\bm u_s \mid \bm u_1, \bm u_t, \mathcal E)p(\bm u_1 \mid \bm u_t, \mathcal E) p(\bm u_t \mid \mathcal E).
\end{equation}
Substituting this factorization back into the integral yields
\begin{equation}
p(\bm u_s \mid \mathcal E) = \iint p(\bm u_s \mid \bm u_1, \bm u_t, \mathcal E)p(\bm u_1 \mid \bm u_t, \mathcal E)p(\bm u_t \mid \mathcal E)\mathrm{d}\bm u_1 \mathrm{d}\bm u_t.
\end{equation}
Replacing \(\mathcal E\) with \(\mathcal{C}, \mathcal{O}\) completes the proof.
\end{proof}

This factorization shows that posterior samples at later homotopy times can be obtained by hierarchical conditional sampling. Given a current state \(\bm u_t \sim p(\bm u_t \mid \mathcal{C}, \mathcal{O})\), one first draws a terminal realization \(\bm u_1\) from \(p(\bm u_1 \mid \bm u_t, \mathcal{C}, \mathcal{O})\) and then an intermediate state \(\bm u_s\) from \(p(\bm u_s \mid \bm u_t, \bm u_1, \mathcal{C}, \mathcal{O})\). In practice these conditional distributions are intractable and we introduce local approximations that lead to a two-step algorithm. The first step is a local MAP update at the terminal time that enforces the constraints, and the second step is an interpolation step that restores compatibility with the learned flow.

\paragraph{Terminal update as a local MAP step}

We begin with the conditional distribution of the terminal state given the current state and the constraints,
\begin{equation}
p(\bm u_1 \mid \bm u_t, \mathcal{C}, \mathcal{O})
\propto
p(\bm u_1 \mid \bm u_t) p(\mathcal{O} \mid \bm u_1) \bm{1}_{\mathcal{C}}(\bm u_1),
\end{equation}
where we have used that the physics constraint \(\mathcal{C}\) depends only on \(\bm u_1\). The learned flow induces a deterministic mapping \(\bm u_t \mapsto \bm u_1\), so the exact conditional law \(p(\bm u_1 \mid \bm u_t)\) is a singular Dirac measure concentrated on the flow trajectory and is not available as a tractable density. Following common practice in diffusion based inverse solvers~\citep{bastek2025physics,huang2024diffusionpde}, we introduce local stochasticity and approximate this transition by a Gaussian centered at a one step flow prediction,
\begin{equation}
p(\bm u_1 \mid \bm u_t)
\approx
\mathcal{N}\bigl(
\hat{\bm u}_1(\bm u_t), \sigma_t^2 I
\bigr)\propto\exp\biggl(-\dfrac{1}{2\sigma_t^2}\|\bm u_1-\hat{\bm u}_1(\bm u_t)\|_2^2\biggr),
\label{eq:gaussian_transition}
\end{equation}
where
\begin{equation}
\hat{\bm u}_1(\bm u_t) = \bm u_t + (1-t) \bm v_\theta(\bm u_t, t)
\end{equation}
is a forward Euler prediction of the flow and \(\sigma_t > 0\) is a scalar parameter representing the uncertainty of the one-step flow prediction.

The likelihood term \(p(\mathcal{O} \mid \bm u_1)\) follows directly from the Gaussian noise postulate introduced in~\eqref{eq:observation_model}. Dropping constant factors, it is given by
\begin{equation}
p(\mathcal{O} \mid \bm u_1)
\propto
\exp\biggl(
 -\dfrac{1}{2\sigma_\mathrm{obs}^2}
 \|\mathcal{H}[\bm u_1] - \bm y\|_2^2
\biggr).
\end{equation}

Combining the Gaussian transition approximation and this observation likelihood yields the following approximation of the negative log conditional posterior:
\begin{equation}
-\log p(\bm u_1 \mid \bm u_t, \mathcal{O}, \mathcal{C})
\approx
\frac{1}{2\sigma_t^2}
\|\bm u_1 - \hat{\bm u}_1(\bm u_t)\|_2^2
+
\frac{1}{2\sigma_\mathrm{obs}^2}
\|\mathcal{H}[\bm u_1] - \bm y\|_2^2 + \mathrm{const},
\end{equation}
subject to the constraint \(\bm u_1 \in \mathcal{C}\). The terminal update can therefore be interpreted as a local MAP estimator for this approximate conditional model. In algorithmic form, we absorb the ratio of variances into a single weight \(\lambda \propto \sigma_t^2/\sigma_\mathrm{obs}^2\) and solve the proximal problem
\begin{equation}
\bm u_1^\star
\in
\arg\min_{\bm u \in \mathcal{C}}
\Bigl\{
\|\bm u - \hat{\bm u}_1(\bm u_t)\|_2^2
+
\lambda\|\mathcal{H}[\bm u] - \bm y\|_2^2
\Bigr\},
\label{eq:terminal_opt}
\end{equation}
which balances trust in the flow prediction against trust in the data. The implementation details and the choice of \(\lambda\) are deferred to~\Cref{subsec:alg_framework}.

\paragraph{Interpolation step and compatibility with the flow}

The second step targets the conditional distribution 
\begin{equation}
p(\bm u_s \mid \bm u_t, \bm u_1, \mathcal{C}, \mathcal{O}).
\end{equation}
The homotopy variable \(t\) is defined by the generative flow of the pretrained model and this flow is learned independently of the physical and observation constraints. Once the terminal state \(\bm u_1\) has been updated to satisfy the constraints, the intermediate states between \(t\) and \(1\) should therefore follow the training bridges of the original flow.

Under the flow matching training objective, these bridges are linear optimal transport paths between a reference sample \(\bm\varepsilon \sim \mu_0\) and a terminal field \(\bm u_1\),
\begin{equation}
\bm u_\tau = (1-\tau)\bm\varepsilon + \tau \bm u_1,
\quad \tau \in [0,1].
\end{equation}
If \(\bm u_t\) and \(\bm u_1\) lie on the same bridge associated with some \(\bm\varepsilon\), the intermediate state at time \(s\) is given exactly by
\begin{equation}
\bm u_s = (1-s)\bm\varepsilon + s\bm u_1 \quad \text{for } s > t.
\end{equation}
After the terminal optimization, however, the updated \(\bm u_1^\star\) no longer lies on the same bridge manifold as \(\bm u_t\). To restore compatibility with the learned flow we instead draw a fresh reference sample \(\bm\varepsilon \sim \mu_0\) and construct a new bridge between \(\bm\varepsilon\) and \(\bm u_1^\star\). For a discrete time schedule \(\{t_n\}_{n=0}^N\) this leads to the interpolation update
\begin{equation}
\bm u_{t_{n+1}} = (1-t_{n+1}) \bm\varepsilon + t_{n+1} \bm u_1^\star.
\end{equation}
We remark that similar interpolation strategies have been used in diffusion based inverse solvers to maintain proximity to the model manifold~\citep{cheng2025gradientfree,utkarsh2025physics}.

\subsection{Algorithmic framework of ProFlow}
\label{subsec:alg_framework}

We now introduce ProFlow, which combines the terminal MAP update and the interpolation step into a two-stage proximal guidance framework. At each homotopy time, we first refine the terminal state through a constrained proximal optimization that enforces the physics and the data, then interpolate back to an intermediate time along a flow-consistent bridge. The overall procedure is summarized in \Cref{alg:proflow}.

\paragraph{Terminal optimization step}

At homotopy time \(t_n\), given the current state \(\bm u_{t_n}\), we form a one-step terminal prediction via the flow velocity field \(\bm v_\theta\) as
\begin{equation}
\hat{\bm u}_1 = \bm u_{t_n} + (1-t_n)\bm v_\theta(\bm u_{t_n}, t_n).
\end{equation}
This plug-in estimate serves as the anchor for the constrained proximal problem
\begin{equation}
\bm u_1
\in
\operatorname*{argmin}_{\bm u \in \mathcal{C}}
\Bigl\{
\|\bm u - \hat{\bm u}_1\|_2^2
+
\lambda\|\mathcal{H}[\bm u] - \bm y\|_2^2
\Bigr\},
\end{equation}
where \(\mathcal{H}\) is the measurement operator and \(\bm y\) are the observed values on the observation domain \(\mathcal{X}_{\mathcal{O}} \subseteq \Omega\).
The first term keeps \(\bm u\) close to the flow prediction \(\hat{\bm u}_1\), the second term enforces data fidelity, and the constraint \(\bm u \in \mathcal{C}\) imposes \(\mathcal{L}(\bm u) = \bm 0\).
The parameter \(\lambda\) controls the relative weight of the data misfit. While it can be interpreted as a surrogate for the variance ratio \(\sigma_t^2/\sigma_\mathrm{obs}^2\) in the probabilistic model, we treat it as a tunable hyperparameter in practice.
For the generative noise injection, we adopt the schedule \(\sigma_t = \frac{1-t}{\sqrt{t^2 + (1-t)^2}}\) following \(\Pi\)GDM~\citep{song2023pseudoinverse}, and we assume a fixed observation noise level \(\sigma_{\mathrm{obs}} = 0.05\).

\begin{remark}[Classification as a hard constraint method]
We classify ProFlow as a hard constraint method because the formulation in~\eqref{eq:terminal_opt} explicitly restricts the search space to \(\mathcal{C}\) via the indicator function implicit in the constrained minimization. In practice, this sub-problem is solved numerically using a constrained proximal optimizer, meaning that physical consistency is satisfied up to the convergence tolerance of the inner optimizer. This contrasts with soft guidance methods~\citep{ben2024d,huang2024diffusionpde} that add residual penalties to the outer sampling loop without a projection mechanism.
\end{remark}

\paragraph{Interpolation step}

To keep the iterate near the learned flow manifold, we interpolate between a fresh noise realization and the refined terminal state. Given \(\bm u_1\) and a draw \(\bm\varepsilon \sim \mu_0\), we set the state for the next time step \(t_{n+1}\) as
\begin{equation}
\bm u_{t_{n+1}} = (1-t_{n+1}) \bm\varepsilon + t_{n+1} \bm u_1.
\end{equation}
This update reinitializes the next state along a linear bridge consistent with the flow matching training procedure. In this way, the subsequent evaluation of the velocity field \(\bm v_\theta(\cdot,t_{n+1})\) operates on samples that remain close to the learned model manifold.

\begin{algorithm}[t]
\caption{Proximal flow guidance (ProFlow)}
\label{alg:proflow}
\begin{algorithmic}[1]
\Require Pretrained FFM velocity \(\bm v_\theta\); measurement operator \(\mathcal{H}\); observations \(\bm y\); constraint set \(\mathcal{C}=\{\bm u \colon \mathcal{L}(\bm u)=\bm 0\}\); schedule \(0=t_0<\cdots<t_N=1\)
\State Sample \(\bm u_{t_0} \sim \mu_0\)
\For{\(n = 0 \text{ to } N-1\)}
\State \(\hat{\bm u}_1 \gets \bm u_{t_n} + (1-t_n) \bm v_\theta(\bm u_{t_n},t_n)\)
\State Perform terminal optimization:
\begin{equation}
\bm u_1 \gets
\operatorname*{argmin}_{\bm u \in \mathcal{C}}
\Bigl\{
\|\bm u-\hat{\bm u}_1\|_2^2
+
\lambda
\|\mathcal{H}[\bm u]-\bm y\|_2^2
\Bigr\}
\end{equation}
\State Sample \(\bm\varepsilon \sim \mu_0\)
\State \(\bm u_{t_{n+1}} \gets (1-t_{n+1}) \bm\varepsilon + t_{n+1} \bm u_1\)
\EndFor
\State \Return \(\bm u_{t_N}\)
\end{algorithmic}
\end{algorithm}

\section{Experiments}
\label{sec:experiments}

\subsection{PDE families and the experimental setup}

We consider three elliptic partial differential equation families together with a time-dependent conservation law. The elliptic benchmarks are Poisson, Helmholtz, and Darcy equations on a bounded domain \(\Omega \subset \mathbb{R}^2\) with Dirichlet boundary conditions and heterogeneous coefficients. In a generic form they can be written as
\begin{equation}
\begin{cases}
-\nabla \cdot (a(\bm x)\nabla u(\bm x)) = f(\bm x), &\text{in } \Omega,\\
u(\bm x)=0, &\text{on } \partial\Omega.\\
\end{cases}
\end{equation}
for Poisson and Darcy flow, with \(a\) interpreted as a diffusivity or permeability field in the Darcy setting, and
\begin{equation}
\begin{cases}
-\Delta u(\bm x) + \kappa u(\bm x) = f(\bm x), &\text{in } \Omega,\\
u(\bm x)=0, &\text{on } \partial\Omega.\\
\end{cases}
\end{equation}
for Helmholtz, where \(\kappa > 0\) is a constant reaction coefficient. For the time-dependent benchmark, we use the one-dimensional viscous Burgers' equation with periodic boundary conditions on a bounded domain \(\Omega\subset\mathbb{R}\):
\begin{equation}
\begin{cases}
\partial_t u + u \partial_x u - \nu \partial_x^2 u = 0, & (t,x) \in (0,T] \times \Omega,\\
u(0, x) = u_0(x), & x \in \Omega.\\
\end{cases} 
\end{equation}
Here, the viscosity \(\nu > 0\). As detailed in~\Cref{rem:time_dependent_extension}, we treat this problem as a global sampling task over the space-time domain \([0, T] \times \Omega \), where the target variable represents the entire spatiotemporal trajectory subject to the evolution constraint.

For the elliptic problems, we work with paired coefficient and solution fields \((\bm a, \bm u)\) discretized on a \(128 \times 128\) grid. All elliptic datasets are obtained from the public releases accompanying Fourier Neural Operator (FNO)~\citep{li2021fourier} and DiffusionPDE~\citep{huang2024diffusionpde}. For Burgers' equation, we follow the DiffusionPDE~\citep{huang2024diffusionpde} setup and represent spatiotemporal fields on a regular space-time grid.

\paragraph{Generative prior}

We employ Functional Flow Matching (FFM)~\citep{kerrigan2024functional} as the generative prior. For each PDE family, we train a separate FFM model on the paired training data \((\bm a, \bm u)\) using only full fields, with no access to test targets, observation masks, or inverse configurations. After training, the FFM prior is frozen and used for all downstream sampling tasks. The architectural detail of the pretrained FFM is given in~\ref{sec:pretrained model-details}.

\paragraph{Baselines}

We compare our method with four generative sampling baselines that are representative of current practice. A high-level comparison is summarized in~\Cref{tab:comp}. The implementation details for all baselines can be found in~\ref{sec:details_baseline}.

\begin{itemize}
\item \emph{FFM}~\citep{kerrigan2024functional} denotes the standard Functional Flow Matching prior. It generates samples from the learned distribution without any sampling-time conditioning, serving as a reference for the unconstrained generative performance.

\item \emph{ECI}~\citep{cheng2025gradientfree} is a gradient-free framework designed for hard-constrained generation. It employs a three-stage iterative process: (\romannumeral1) an \emph{extrapolation} step that estimates the terminal state based on the current state; (\romannumeral2) a \emph{correction} step that replaces part of the state to strictly enforce observation constraints; and (\romannumeral3) an \emph{interpolation} step that maps the corrected state back to an intermediate time \(t\) to maintain alignment with the generative probability path. However, because the framework relies on geometric projections to enforce constraints, it lacks a mechanism to ensure physical consistency during correction.

\item \emph{DiffusionPDE}~\citep{huang2024diffusionpde} is a soft guidance method. It employs gradient-based guidance, injecting a correction term calculated from the gradients of the PDE residual and data fidelity loss with respect to the current state estimate. This steers the trajectory toward physically feasible regions but, unlike projection methods, does not guarantee strict constraint satisfaction.

\item \emph{D-Flow}~\citep{ben2024d} casts the sampling problem as an optimization over the initial noise space. Unlike DiffusionPDE~\citep{huang2024diffusionpde}, which applies local guidance at each denoising step, D-Flow is a soft guidance framework that performs global optimization on the source distribution \(\bm u_0\) by differentiating through the ODE solver. While the flow diffeomorphism implicitly regularizes the solution to remain close to the data manifold, the requirement to backpropagate gradients through the entire integration trajectory results in significant memory consumption and prolonged sampling time.

\item \emph{PCFM}~\citep{utkarsh2025physics} can be viewed as a generalization of ECI~\citep{cheng2025gradientfree} that enforces arbitrary nonlinear constraints. It operates by repeatedly shooting along the learned flow to the terminal time, applying a single Gauss--Newton update to bring the terminal state closer to the constraint manifold, and transporting this correction back to the current state via an optimal-transport displacement interpolant. However, incorporating PDE residuals into the Gauss--Newton step incurs prohibitive computational costs due to Jacobian evaluations. Furthermore, the projection-based update focuses solely on constraint satisfaction and does not guarantee adherence to the generative prior.
\end{itemize}

\paragraph{Evaluation metrics}

Following prior work~\citep{cheng2025gradientfree,huang2024diffusionpde}, we report several metrics that quantify reconstruction accuracy, distributional statistics, and physical consistency.

\begin{itemize}
\item \emph{Reconstruction error} (RE) measures the mean squared discrepancy between reconstructed fields and ground truth fields. This serves as our primary metric for overall observational consistency and reconstruction quality. In the joint setting, RE is computed over both solution and coefficient fields.
\item \emph{Mean MSE} (MMSE) measures the mean squared error of the sample mean with respect to the ground truth field. This reflects the accuracy of the estimated posterior mean (the first moment of the distribution).
\item \emph{Standard deviation MSE} (SMSE) measures the mean squared error of the sample standard deviation. This quantifies how well the method captures predictive uncertainty and the true distributional statistics of the posterior.
\item \emph{PDE error} (PDE Err.) measures the violation of the governing equations encoded by the differential operator, directly quantifying the physical consistency of the generated samples. We follow~\citep{zhang2025physics} for computing the PDE error of the Poisson, Helmholtz, and Burgers' equations, and follow~\citep{huang2024diffusionpde} for Darcy flow.
\end{itemize}

\subsection{Elliptic PDE results}

We first study the three elliptic PDE families: Poisson, Helmholtz, and Darcy equations. Within each setting, we analyze performance across three distinct observation regimes, selected to represent standard challenges in data-constrained PDE solving: (\romannumeral1) \emph{forward problem}, where only the coefficient field \(\bm a\) is observed and the goal is to infer the corresponding solution field \(\bm u\); (\romannumeral2) \emph{inverse problem}, where only the solution field \(\bm u\) is available and the goal is to recover the coefficient field \(\bm a\); and (\romannumeral3) \emph{joint sparse reconstruction}, where both coefficient and solution fields \((\bm a,\bm u)\) are partially observed and the goal is to reconstruct both fields simultaneously.

\begin{figure}[b]
\centering
\includegraphics[width=1.0\linewidth]{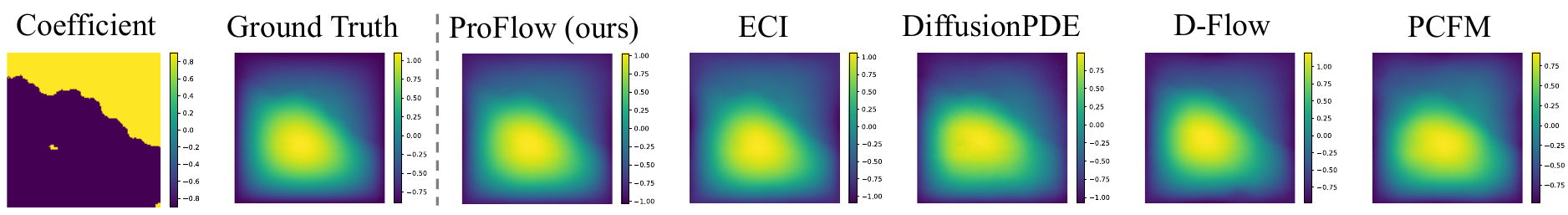}
\caption{Visual comparison of the forward problem on the Darcy equation. The leftmost column displays the input coefficient field. The subsequent columns compare the Ground Truth solution against predictions from ProFlow and baseline approaches (ECI, DiffusionPDE, D-Flow, and PCFM). ProFlow predicts the solution field with high visual fidelity, capturing details that more closely match the ground truth than baseline methods.}
\label{fig:expr_forward}
\end{figure}

\begin{table}[t]
\centering
\caption{Forward and inverse problem results for Poisson, Helmholtz, and Darcy equations. In the forward problem setting the coefficient field \(\bm a\) is fully observed and the task is to infer the corresponding solution field \(\bm u\). In the inverse problem setting \(\bm u\) is available and the task is to recover \(\bm a\). Reported quantities include reconstruction error (RE), mean MSE (MMSE), standard deviation MSE (SMSE), and PDE error (PDE Err.), averaged over \(500\) samples. Lower values indicate better performance. The best performing method in each row is shown in boldface and the second best is underlined. PCFM results are omitted (indicated by ``--'') in configurations where the method exhibited numerical divergence, despite extensive hyperparameter tuning.}
\vspace{0.3em}
\small
\setlength{\tabcolsep}{6pt}
\begin{tabular}{@{}lllccccc@{}}
\toprule
PDE & Setting & Metric & ProFlow (Ours) & ECI~\citep{cheng2025gradientfree} & DiffusionPDE~\citep{huang2024diffusionpde} & D-Flow~\citep{ben2024d} & PCFM~\citep{utkarsh2025physics} \\
\toprule
\multirow{8}{*}{Poisson}
 & \multirow{4}{*}{Forward}
 & RE (\(\times10^{-2}\)) & \textbf{0.67} & 5.05 & 1.60 & \underline{1.24} & 2.18 \\
 & & MMSE (\(\times10^{-3}\)) & \textbf{3.07} & 9.39 & 7.50 & \underline{4.08} & 20.6 \\
 & & SMSE (\(\times10^{-3}\)) & \textbf{1.58} & 22.0 & \underline{8.07} & 8.97 & 5.44 \\
 & & PDE Err.\ (\(\times10^{-9}\)) & \textbf{1.17} & 6.60 & 8.79 & 6.97 & \underline{2.62} \\
\cmidrule(lr){2-8}
 & \multirow{4}{*}{Inverse}
 & RE (\(\times10^{-2}\)) & \underline{3.40} & 5.25 & 6.32 & \textbf{2.96} & --\\
 & & MMSE (\(\times10^{-2}\)) & \underline{1.58} & 2.31 & 2.02 & \textbf{0.14} & -- \\
 & & SMSE (\(\times10^{-4}\)) & \textbf{2.39} & \underline{6.84} & 33.6 & 49.4 & -- \\
 & & PDE Err.\ (\(\times10^{-6}\)) & \textbf{1.39} & 1.90 & 6.92 & \underline{1.52} & -- \\
\midrule
\multirow{8}{*}{Helmholtz}
 & \multirow{4}{*}{Forward}
 & RE (\(\times10^{-3}\)) & \textbf{2.46} & \underline{8.53} & 130 & 19.3 & 244 \\
 & & MMSE (\(\times10^{-4}\)) & \textbf{6.55} & 67.0 & \(1.03\times 10^3\) & \underline{33.7} & 22.5 \\
 & & SMSE (\(\times10^{-3}\)) & \textbf{1.00} & 8.71 & \underline{7.07} & 8.69 & 7.05 \\
 & & PDE Err.\ (\(\times10^{-9}\)) & \textbf{5.95} & \underline{6.45} & 9.19 & 6.89 & 8.67 \\
\cmidrule(lr){2-8}
 & \multirow{4}{*}{Inverse}
 & RE (\(\times10^{-1}\)) & \underline{1.84} & \(5.70\times10^{3}\) & 3.63 & \textbf{1.64} & 9.96 \\
 & & MMSE (\(\times10^{-2}\)) & \underline{2.74} & \(6.30\times10^{3}\) & 3.01 & \textbf{0.40} & 34.2 \\
 & & SMSE (\(\times10^{-2}\)) & \textbf{2.06} & \(2.16\times10^{4}\) & \underline{2.36} & \underline{2.36} & 87.2 \\
 & & PDE Err.\ (\(\times10^{-6}\)) & \underline{7.82} & 150 & 10.8 & \textbf{4.37} & 17.8 \\
\midrule
\multirow{8}{*}{Darcy}
 & \multirow{4}{*}{Forward}
 & RE (\(\times10^{-3}\)) & \textbf{3.10} & \underline{5.72} & 9.42 & 6.04 & 3.13 \\
 & & MMSE (\(\times10^{-3}\)) & \textbf{1.41} & \underline{2.11} & 4.82 & 10.83 & 2.26 \\
 & & SMSE (\(\times10^{-3}\)) & \textbf{1.07} & \underline{2.07} & 2.08 & 8.37 & 5.23\\
 & & PDE Err.\ (\(\times10^{-7}\)) & 3.03 & 3.07 & 2.78 & \underline{2.46} & \textbf{1.16} \\
\cmidrule(lr){2-8}
 & \multirow{4}{*}{Inverse}
 & RE (\(\times10^{-1}\)) & \textbf{2.41} & \underline{3.23} & 3.92 & 4.32 & 10.7 \\
 & & MMSE (\(\times10^{-2}\)) & \textbf{1.38} & \underline{2.03} & 5.07 & 2.51 & 79.7 \\
 & & SMSE (\(\times10^{-3}\)) & \textbf{3.56} & 7.19 & 5.19 & \underline{3.70} & 183\\
 & & PDE Err.\ (\(\times10^{-5}\)) & \textbf{1.17} & \textbf{1.17} & 2.84 & \underline{1.44} & 3.26 \\
\bottomrule
\end{tabular}
\label{tab:forward_inverse_unified}
\end{table}

\paragraph{Forward and inverse problems}

In the forward problem setting we observe the full coefficient field \(\bm a\) and aim to reconstruct the associated solution field \(\bm u\). As reported in~\Cref{tab:forward_inverse_unified} and illustrated in~\cref{fig:expr_forward}, ProFlow achieves the best reconstruction quality across all three elliptic PDE families. For Poisson equation, ProFlow attains the lowest RE, MMSE, and SMSE, with RE reduced by approximately a factor of two relative to the next best method (D-Flow) and by nearly an order of magnitude relative to ECI. A similar pattern holds for Helmholtz and Darcy forward solves: ProFlow consistently yields the smallest RE, MMSE, and SMSE, while the alternatives incur noticeably larger reconstruction errors (for example, DiffusionPDE is two to three orders of magnitude worse on Helmholtz). In terms of PDE residuals, ProFlow achieves the lowest values for Poisson and Helmholtz and remains competitive on Darcy, where the residuals of all methods are of the same order of magnitude. These results indicate that, in the forward problem setting, ProFlow provides accurate reconstruction and distributional statistics, and solutions that closely satisfy the governing equations.

In the inverse problem setting we observe the solution field \(\bm u\) and aim to recover the underlying coefficient field \(\bm a\). As reported in~\Cref{tab:forward_inverse_unified}, for Poisson, D-Flow achieves the smallest RE and MMSE, while ProFlow attains the best SMSE and the lowest PDE residual. Thus, D-Flow is slightly more accurate in terms of point estimates of the coefficients, whereas ProFlow offers more accurate distributional statistics and closer adherence to the PDE. For Helmholtz inverse problems, ProFlow and D-Flow clearly dominate the other baselines: ECI and DiffusionPDE suffer from very large errors, while ProFlow provides the best SMSE and D-Flow attains the lowest RE, MMSE, and PDE residual. This suggests that D-Flow is particularly effective at shrinking reconstruction error for this more challenging operator, whereas ProFlow gives the most stable distributional estimates. On Darcy inverse problems, ProFlow achieves the lowest RE, MMSE, and SMSE, and shares the best PDE residual with ECI, with D-Flow close behind. Overall, across the three elliptic PDEs, ProFlow either matches or surpasses competing methods on most metrics.

\paragraph{Joint sparse reconstruction}

In the joint sparse reconstruction setting we assume that both the coefficient field \(\bm a\) and the solution field \(\bm u\) are only partially observed, with \(50\%\) of their spatial grid points revealed in our experiments, and seek to reconstruct the full pair \((\bm a, \bm u)\) simultaneously. The quantitative results in~\Cref{tab:joint_pde} show that ProFlow provides the most balanced performance across the three elliptic PDE families. For Poisson, ProFlow achieves the lowest reconstruction error and MMSE, with RE \(6.93\times 10^{-2}\) versus \(7.00\times 10^{-2}\) for DiffusionPDE and substantially larger errors for ECI and D-Flow. It also attains the smallest PDE residual, around \(1.76\times 10^{-8}\), while ECI incurs residuals several orders of magnitude larger. ECI obtains the best SMSE, with ProFlow a close second, so ProFlow trades a modest increase in variance error for markedly better reconstruction accuracy and physical consistency. For Helmholtz, ProFlow and DiffusionPDE are the only methods that achieve low reconstruction error in the joint setting. ProFlow yields the smallest RE and PDE residual, whereas DiffusionPDE attains the best MMSE but with slightly higher RE. ECI again produces the lowest SMSE but at the cost of very large reconstruction and PDE errors. For Darcy flow, ProFlow clearly dominates the baselines, achieving the lowest RE, MMSE, SMSE, and PDE residual, with RE roughly four to six times smaller than the competing methods and PDE error reduced by more than a factor of two relative to DiffusionPDE. Collectively, these results indicate that ProFlow enables accurate and physically consistent reconstructions under severe undersampling and either matches or improves upon the best competing method on most metrics while keeping PDE residuals small.

\begin{table}[t]
\centering
\caption{Joint sparse reconstruction results for Poisson, Helmholtz, and Darcy equations. Both the coefficient field \(\bm a\) and the solution field \(\bm u\) are reconstructed from sparse observations of both fields (in our experiments \(50\%\) of the spatial grid points of \(\bm a\) and \(\bm u\) are revealed). Reported metrics are reconstruction error (RE), mean MSE (MMSE), standard deviation MSE (SMSE), and PDE error (PDE Err.), averaged over \(500\) samples. Lower values indicate better performance. The best value in each row is shown in boldface and the second best is underlined. PCFM results are omitted (indicated by ``--'') in configurations where the method exhibited numerical divergence, despite extensive hyperparameter tuning.}
\vspace{0.3em}
\small
\setlength{\tabcolsep}{9.5pt}
\begin{tabular}{@{}llccccc@{}}
\toprule
PDE & Metric & ProFlow (Ours) & ECI~\citep{cheng2025gradientfree} & DiffusionPDE~\citep{huang2024diffusionpde} & D-Flow~\citep{ben2024d} & PCFM~\citep{utkarsh2025physics} \\
\midrule
\multirow{4}{*}{Poisson}
 & RE (\(\times10^{-2}\)) & \textbf{6.93} & 14.6 & \underline{7.00} & 73.1 & -- \\
 & MMSE (\(\times10^{-2}\)) & \textbf{1.19} & 2.20 & \underline{1.26} & 12.8 & -- \\
 & SMSE (\(\times10^{-1}\)) & \underline{1.61} & \textbf{0.84} & 1.62 & 8.26 & -- \\
 & PDE Err.\ (\(\times10^{-8}\)) & \textbf{1.76} & \(6.25\times 10^3\) & \underline{5.48} & 9.03 & -- \\
\midrule
\multirow{4}{*}{Helmholtz}
 & RE (\(\times10^{-2}\)) & \textbf{5.94} & 12.6 & \underline{5.97} & 95.2 & 88.1 \\
 & MMSE (\(\times10^{-3}\)) & \underline{6.01} & 13.5 & \textbf{5.31} & 188 & 332 \\
 & SMSE (\(\times10^{-1}\)) & \underline{1.42} & \textbf{0.76} & \underline{1.42} & 11.5 & 3.58 \\
 & PDE Err.\ (\(\times10^{-8}\)) & \textbf{7.32} & \(5.84\times 10^3\) & \underline{10.5} & 46.1 & \(8.39\times 10^3\) \\
\midrule
\multirow{4}{*}{Darcy}
 & RE (\(\times10^{-2}\)) & \textbf{5.34} & 30.1 & \underline{24.1} & 46.3 & 22.1 \\
 & MMSE (\(\times10^{-2}\)) & \textbf{1.61} & \underline{6.47} & 6.69 & 14.6 & 13.4 \\
 & SMSE (\(\times10^{-3}\)) & \textbf{4.92} & 63.6 & 57.7 & \underline{11.3} & 94.3 \\
 & PDE Err.\ (\(\times10^{-5}\)) & \textbf{2.74} & 55.7 & \underline{6.83} & 10.7 & 28.0 \\
\bottomrule
\end{tabular}
\label{tab:joint_pde}
\end{table}

\subsection{Controllable generation with initial and boundary conditions}

We next evaluate ProFlow on controllable generation of time dependent PDE solutions under prescribed initial or boundary conditions. We focus on the one dimensional viscous Burgers' equation with given viscosity \(\nu > 0\) and prescribed initial and boundary data. The goal is to generate the full temporal trajectory \(\bm u_{0:T}\) from partial information. We consider two configurations: (\romannumeral1) initial condition (IC) configuration, where the initial state \(\bm u_0\) is known and sparse spatial observations of \(\bm u(t,\cdot)\) at later times are provided. The task is to reconstruct the full space time evolution consistent with the Burgers' dynamics and the observed values, and (\romannumeral2) boundary condition (BC) configuration, where the temporal traces of the boundary values are prescribed together with sparse interior observations and the task is again to reconstruct the full trajectory.

\Cref{tab:burgers_bc_ic} summarizes the quantitative results for BC and IC condition configurations, while~\cref{fig:expr_icbc} visualizes the reconstructed trajectories across methods. Under the BC configuration, ProFlow attains the lowest RE, MMSE, and SMSE. Notably, ProFlow reduces reconstruction error by nearly 40\% relative to the second-best method, DiffusionPDE, and by an order of magnitude compared to ECI and D-Flow. Regarding physical consistency, the PDE error of ProFlow remains competitive with D-Flow and ECI and is significantly lower than that of DiffusionPDE and PCFM. This indicates that ProFlow recovers trajectories that are both highly accurate and physically valid. In the IC configuration, ProFlow again demonstrates strong performance, achieving the lowest MMSE and ranking second in RE and SMSE, closely trailing D-Flow. The PDE residuals for ProFlow are tied with DiffusionPDE and remain comparable to the best-performing D-Flow, significantly outperforming ECI. As shown qualitatively in~\cref{fig:expr_icbc}, ProFlow effectively incorporates partial temporal information—whether as initial states or boundary traces—to generate sharp, physically consistent solutions where other baselines may exhibit over-smoothing or artifacts.

\begin{table}[t]
\centering
\caption{Burgers' reconstruction under boundary condition (BC) and initial condition (IC) configurations. For BC, we are given boundary observations over time, whereas for IC, we are given the initial snapshot of the system. Reported metrics are reconstruction error (RE), mean MSE (MMSE), standard deviation MSE (SMSE), and PDE error (PDE Err.). Lower values indicate better performance. The best value in each row is shown in boldface and the second best is underlined. PCFM results are omitted (indicated by ``--'') in configurations where the method exhibited numerical divergence, despite extensive hyperparameter tuning.}
\vspace{0.3em}
\small
\setlength{\tabcolsep}{5pt}
\begin{tabular}{@{}llccccc@{}}
\toprule
Condition & Metric & ProFlow (Ours) & ECI~\citep{cheng2025gradientfree} & DiffusionPDE~\citep{huang2024diffusionpde} & D-Flow~\citep{ben2024d} & PCFM~\citep{utkarsh2025physics} \\
\midrule
\multirow{4}{*}{IC}
 & RE (\(\times10^{-2}\)) & \underline{1.71} & 3.68  & 5.80 & \textbf{1.40} & -- \\
 & MMSE (\(\times10^{-2}\)) & \textbf{1.95} & 3.36 & 2.44 & \underline{2.46} & -- \\
 & SMSE (\(\times10^{-3}\)) & \underline{9.87} & 13.12 & 33.75 & \textbf{8.82} & -- \\
 & PDE Err.\ (\(\times10^{-4}\)) & \underline{1.67} & 2.19 & \underline{1.67} & \textbf{1.62} & -- \\
\midrule
\multirow{4}{*}{BC}
 & RE (\(\times10^{-3}\)) & \textbf{0.32} & 4.52 & \underline{0.53} & 3.15 & 178.99 \\
 & MMSE (\(\times10^{-4}\)) & \textbf{2.90} & 26.3 & \underline{5.14} & 42.84
 & 154 \\
 & SMSE (\(\times10^{-4}\)) & \textbf{4.30} & 44.0 & 11.89 & 198.03 & 49.5 \\
 & PDE Err.\ (\(\times10^{-4}\)) & 3.15 & \textbf{2.44} & 58.52 & \underline{2.50} & 12.4 \\
\bottomrule
\end{tabular}
\label{tab:burgers_bc_ic}
\end{table}

\begin{figure}[htb]
\centering
\includegraphics[width=0.9\linewidth]{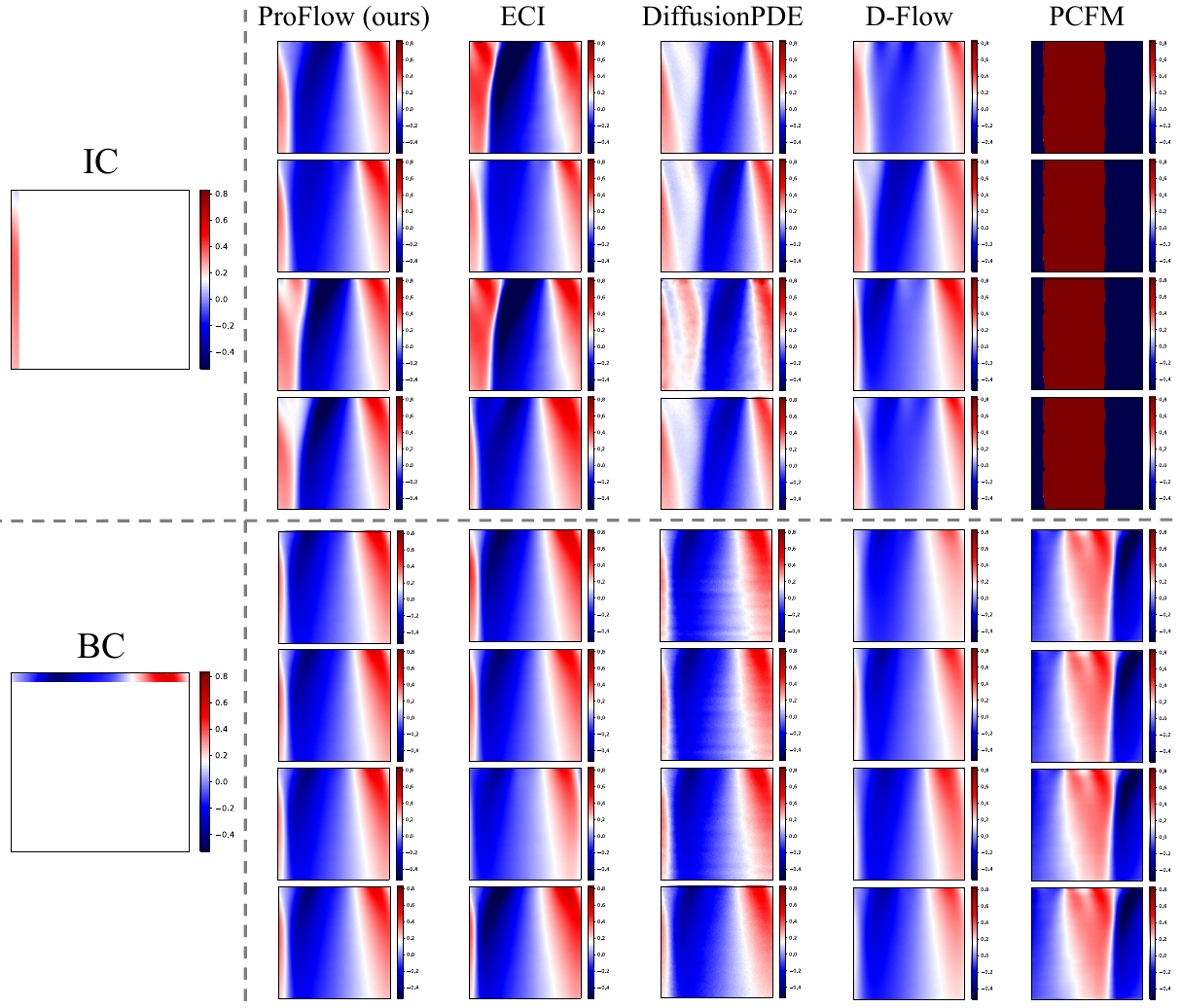}
\caption{Visual comparison of generated solutions for the Burgers' equation. The top section displays results for the Initial Condition (IC) configuration, while the bottom section displays the Boundary Condition (BC) configuration. The leftmost column visualizes the sparse input provided to the models. Since the provided IC/BC data does not uniquely constrain the PDE solution, the reconstruction is ill-posed. To capture this uncertainty, we generate four distinct samples for each method by varying the initial random noise. While baselines like PCFM exhibit mode collapse (producing identical samples) or artifacts, ProFlow generates diverse, physically plausible trajectories consistent with the governing PDE.}
\label{fig:expr_icbc}
\end{figure}

\subsection{Recovering solutions through sparse temporal observation}

\begin{table}[b]
\centering
\caption{Burgers' trajectory reconstruction from sparse temporal observations. Methods must recover the full spatiotemporal solution \(u_{0:T}\) given spatial measurements at only five random timestamps. Reported metrics are reconstruction error (RE), mean MSE (MMSE), standard deviation MSE (SMSE), and PDE error (PDE Err.). Lower values indicate better performance. The best value in each row is shown in boldface and the second best is underlined.}
\vspace{0.3em}
\small
\setlength{\tabcolsep}{4pt}
\begin{tabular}{@{}llccccc@{}}
\toprule
Condition & Metric & ProFlow (Ours) & ECI~\citep{cheng2025gradientfree} & DiffusionPDE~\citep{huang2024diffusionpde} & D-Flow~\citep{ben2024d} & PCFM~\citep{utkarsh2025physics} \\
\midrule
\multirow{4}{*}{Sparse Obs.}
 & RE (\(\times10^{-3}\))
 & \textbf{1.33}
 & 5.04
 & 1.65
& \underline{1.53}
 & 30.5 \\
 & MMSE (\(\times10^{-4}\))
 & \textbf{1.73}
 & 7.50
 & 3.55
 & \underline{2.85}
 & 232 \\
 & SMSE (\(\times10^{-4}\))
 & \textbf{3.80}
 & 14.82
 & 7.84
 & \underline{5.61}
 & 28.8 \\
 & PDE Err.\ (\(\times10^{-4}\))
 & \textbf{2.77}
 & 3.14
 & 4.04
 & \underline{3.06}
 & 5.38 \\
\bottomrule
\end{tabular}
\label{tab:timestep}
\end{table}

\begin{figure}[htb]
\centering
\includegraphics[width=1.0\linewidth]{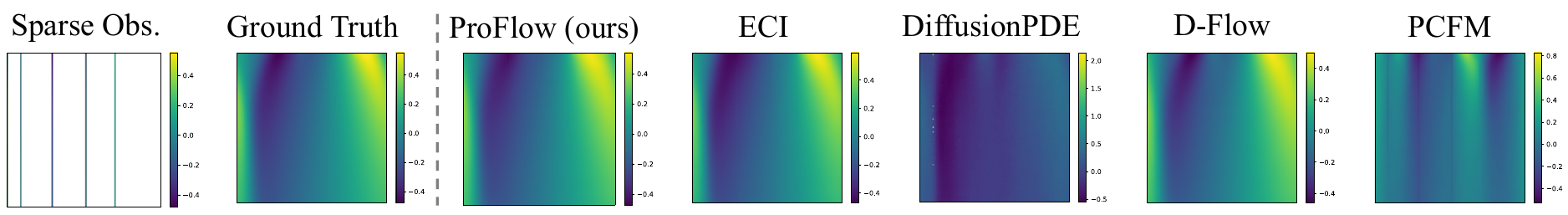}
\caption{Visual comparison of sparse-in-time reconstruction on the Burgers' equation. The leftmost column visualizes the sparse input data, consisting of observations at only five randomly sampled timestamps. The subsequent columns contrast the ground truth spatiotemporal solution with reconstructions from ProFlow and baseline methods (ECI, DiffusionPDE, D-Flow, PCFM). ProFlow accurately interpolates the shock propagation dynamics between observed timestamps, whereas baselines exhibit significant blurring or spurious oscillations in the unobserved intervals.}
\label{fig:expr_sparse_obs}
\end{figure}

We adopt the experimental setup of DiffusionPDE~\citep{huang2024diffusionpde} applied to the \(1\)-dimensional dynamic Burgers' equation. In this setting, we assume access to spatial observations at only \(5\) randomly sampled timestamps within the interval \([0, T]\) and aim to reconstruct the complete spatiotemporal solution \(u_{0:T}\). \Cref{tab:timestep} demonstrates that ProFlow outperforms all baseline methods, achieving the lowest errors across every reported metric. Specifically, our method attains a reconstruction error of \(1.33 \times 10^{-3}\) and a PDE residual of \(2.77 \times 10^{-4}\), surpassing the strongest baseline, D-Flow, by approximately \(10\%\). Notably, projection-based methods like ECI and PCFM struggle to bridge the large time gaps, leading to significant deviations in distributional statistics. As illustrated in~\cref{fig:expr_sparse_obs}), this quantitative advantage translates to a sharp reconstruction of the shock propagation dynamics, where ProFlow avoids the blurring or spurious oscillations exhibited by baseline methods.

\section{Conclusion and future work}
\label{sec:conclusion}

In this work, we introduced ProFlow, a principled framework for zero-shot physics-consistent sampling using pretrained Functional Flow Matching priors. Motivated by the limitations of existing sampling-time methods which often struggle to balance strict constraint satisfaction with generative fidelity, ProFlow formulates the sampling process as a sequence of constrained posterior updates. By alternating between a terminal proximal optimization step that enforces physical and observational consistency, and an interpolation step that restores compatibility with the learned flow probability path, our method provides a rigorous mechanism to navigate the intersection of the solution manifold and the data manifold.

We provided a Bayesian interpretation of this procedure as a local MAP estimation scheme and validated its performance across a diverse suite of benchmarks, including forward, inverse, and joint reconstruction tasks for Poisson, Helmholtz, Darcy, and viscous Burgers' equations. Our numerical results demonstrate that ProFlow consistently outperforms state-of-the-art diffusion- and flow-based baselines, achieving superior reconstruction accuracy and lower PDE residuals while maintaining accurate distributional statistics.

Several directions for future work remain. First, while ProFlow avoids retraining, the computational cost of solving a proximal optimization problem at each sampling step is non-negligible. Future research could investigate accelerated splitting schemes to speed up the terminal refinement step. Second, extending this framework to turbulent, or chaotic systems remains an important challenge. Scaling ProFlow to these regimes may require integrating latent-space flow matching models or multi-resolution strategies to handle the increased spatial complexity. Finally, while our method admits a local Bayesian interpretation, establishing global convergence guarantees for the coupled proximal-flow iteration represents a valuable theoretical open problem. Addressing these challenges will further advance the capability of generative models to serve as reliable, rigorous tools for scientific discovery.

\bibliographystyle{siam}
\bibliography{refs}

\appendix
\crefalias{section}{appendix}

\section{Implementation details of pretrained FFM}
\label{sec:pretrained model-details}

\paragraph{Network architecture}

For all PDE benchmarks we employ a Fourier Neural Operator (FNO)~\citep{li2021fourier} as the velocity field estimator \(\bm v_\theta(\bm u_t, t)\), following the architectural configuration used in ECI~\citep{cheng2025gradientfree}. Concretely, we use an FNO backbone with \(L = 4\) Fourier layers and a hidden width of \(d_h = 64\). The network first lifts the input channels to a higher dimensional feature space of width \(d_{\mathrm{lift}} = 256\), applies a stack of spectral convolution blocks with a fixed number of retained Fourier modes in each spatial direction, and finally projects back to the physical channel dimension. This design gives a globally expressive yet translation equivariant architecture that is well suited for learning solution operators of PDEs~\citep{li2021fourier}.

\paragraph{Input representation}

The model takes as input the noisy state \(\bm u_t\), concatenated with a time embedding and spatial positional encodings. In code, \(\bm u_t \in \mathbb{R}^{B \times H \times W \times C}\) is first permuted to channel first format and then augmented with additional channels. The time variable \(t\) is represented either as a scalar or a batch-wise vector and is mapped to a time embedding of dimension \(d_{\mathrm{emb}} = 32\) using Gaussian Fourier features. This embedding is broadcast to the spatial grid so that each spatial location receives the same temporal feature. Spatial coordinates are normalized to the domain \([0,1]^d\) and discretized on the same grid as \(\bm u_t\). We stack the coordinate grids along the channel dimension to form positional encodings. The final input to the FNO therefore has \(C + d_{\mathrm{emb}} + d\) channels, corresponding to the physical state, the time embedding, and the spatial coordinates.

\paragraph{Training configuration}

We train the FNO velocity field using an optimal transport conditional flow matching objective with a linear noise schedule, consistent with recent work on flow matching models~\citep{lipman2023flow,liu2023flow}. Each mini batch is drawn from a PDE specific training set that provides clean solution fields \(\bm u_1\) on a fixed spatial grid. For every clean sample we independently draw an initial noise realization \(\bm u_0 \sim \mathcal{N}(\bm 0, \bm I)\) that matches the spatial resolution and channel count of \(\bm u_1\). We then sample a time \(t \sim \mathcal{U}(0, 1)\) and construct the linear interpolant
\begin{equation}
\bm u_t = (1 - t)\bm u_0 + t\bm u_1 ,
\end{equation}
which follows the straight path between noise and data. This corresponds to the rectified flow setting where the target probability path is the linear interpolation between the prior and the data distribution.

Given \(\bm u_t\) and \(t\), we evaluate the FNO \(\bm v_\theta(\bm u_t, t)\) and regress it toward the analytical conditional velocity of the straight path. For a linear interpolation the target vector field is the time independent difference \(\bm v_t^\star(\bm u_t) = \bm u_1 - \bm u_0\)~\citep{lipman2023flow,liu2023flow}. The flow matching loss for a batch is therefore
\begin{equation}
\mathcal{L}_{\mathrm{FM}}
=
\mathbb{E}_{\bm u_0,\bm u_1,t}
\bigl\|
\bm v_\theta(\bm u_t, t)
-
(\bm u_1 - \bm u_0)
\bigr\|_2^2 ,
\end{equation}
which we implement as a mean squared error between the predicted and target velocities over all spatial locations and channels. This simulation free regression objective avoids backpropagation through an ODE solver and reduces to standard supervised learning over randomly sampled triples \((\bm u_0, \bm u_1, t)\).

Our default configuration uses the Adam optimizer with a learning rate of \(3 \times 10^{-4}\) and a batch size of \(128\). We train for \(10{,}000\) iterations on each PDE benchmark, which corresponds to \(10{,}000\) gradient updates over mini batches sampled from the training set. The main training loop repeatedly draws a batch from the data loader, samples noise and times, constructs the interpolated states, computes the flow matching loss, and updates the network parameters.

\section{Implementation details of baseline methods}
\label{sec:details_baseline}

We provide detailed configurations for each baseline considered in this paper. When necessary, we adapt each method within our ProFlow framework, incorporating their respective physical-constraint mechanisms to ensure faithful implementation and comparable performance. Unless otherwise specified, we employ an explicit Euler integrator with \(100\) steps for all experiments to ensure consistent evaluation.

\paragraph{ECI Sampling}

We follow the ECI sampling~\citep{cheng2025gradientfree} procedure, which performs iterative extrapolation--correction--interpolation updates throughout the sampling trajectory. At each step we first compute the one-step terminal prediction (analogous to the Tweedie estimate in diffusion) as \(\hat{\bm u}_1 = \bm u_t + (1-t)\bm v_\theta(\bm u_t, t)\). We then impose the observation mask by projecting this prediction back onto the constraint manifold using a correction operator \(\mathcal{P}_{\mathcal{C}}(\cdot)\). The resulting update maps the corrected estimate back to the intermediate time \(t - \Delta t\) using
\begin{equation}
\bm u_{t+\Delta t}
=
(t + \Delta t)\mathcal{P}_{\mathcal{C}}(\hat{\bm u}_1)
+
(1 - t - \Delta t)\bigl(\bm u_t + \bm v_\theta(\bm u_t, t)\Delta t\bigr).
\label{eq:eci-update}
\end{equation}
To ensure a consistent configuration across all PDEs considered, we apply \(n_{\mathrm{mix}}\) ECI \emph{mixing} iterations at each Euler step. A single mixing iteration reuses the same time index and repeatedly applies the extrapolation--correction--interpolation update in \eqref{eq:eci-update}, feeding the corrected field back into the flow ODE. This recursive application promotes information exchange between constrained and unconstrained regions and empirically reduces artifacts in the generated trajectories~\citep{cheng2025gradientfree}. The hyperparameter \(n_{\mathrm{mix}}\) therefore controls how aggressively the constraint is propagated at each time step, and in all our experiments we set \(n_{\mathrm{mix}} = 5\). We further control the stochasticity of ECI sampling by resampling the Gaussian prior noise used in the interpolation step, following the re-sampling strategy of Cheng et al.~\citep{cheng2025gradientfree}. For more implementation details, please refer to~\url{https://github.com/amazon-science/ECI-sampling/tree/main}.

\paragraph{DiffusionPDE}

DiffusionPDE~\citep{huang2024diffusionpde} augments each sampling step with soft guidance terms derived from both observational consistency and PDE residual minimization. Let \(\bm{u}_t\) denote the evolving PDE state and \(\bm{v}_\theta(\bm{u}_t, t)\) be the pretrained flow velocity field. At each integration step we update the state according to
\begin{equation}
\bm{u}_{t+\Delta t}
=
\bm{u}_t
+ \bm{v}_\theta(\bm{u}_t, t) \Delta t
-
\alpha_t
\nabla_{\bm{u}_t} L_{\mathrm{obs}}(\bm{u}_t)
-
\beta_t
\nabla_{\bm{u}_t}
L_{\mathrm{PDE}}(\bm u_t) ,
\label{eq:diffusionpde_update}
\end{equation}
where the first term \(\bm{u}_t + \bm{v}_\theta(\bm{u}_t, t)\Delta t\) corresponds to an explicit Euler step and the remaining two terms subtract gradients of task-specific loss functionals. To encode observational consistency we define the observation loss
\begin{equation}
L_{\mathrm{obs}}(\bm{u}_t)
=
\bigl\|
\mathcal{H}(\bm{u}_t) - \bm{y}
\bigr\|^2,
\end{equation}
where \(\mathcal{H}\) maps the full state \(\bm{u}_t\) to the measurement space, and \(\bm{y}\) denotes the measured data. The gradient \(\nabla_{\bm{u}_t} L_{\mathrm{obs}}(\bm{u}_t)\) therefore implements a standard data-fidelity step that pulls the current sample toward agreement with the observations. For the physics term we write each PDE in residual form \(\mathcal{R}(\bm{u})(\bm{x}) = 0\), where \(\mathcal{R}\) collects the differential operator, coefficients and forcing terms evaluated at each spatial and possibly temporal location. Following DiffusionPDE~\citep{huang2024diffusionpde} we do not apply the PDE loss directly to the noisy state \(\bm{u}_t\) but instead to the posterior mean predictor \(\hat{\bm{u}}_1(\bm{u}_t, t)\), which plays the role of a denoised estimate at the terminal time. This yields the PDE loss
\begin{equation}
L_{\mathrm{PDE}}(\bm{u}_t)
=
\bigl\|
\mathcal{R}\bigl( \hat{\bm{u}}_1(\bm{u}_t, t) \bigr)
\bigr\|^2,
\end{equation}
whose gradient \(\nabla_{\bm{u}_t} L_{\mathrm{PDE}}(\bm{u}_t)\) is obtained by backpropagating through both the PDE discretization used to evaluate \(\mathcal{R}\) and the denoising network that produces \(\hat{\bm{u}}_1\). The scalar schedules \(\alpha_t\) and \(\beta_t\) control the relative strengths of the observation and physics guidance throughout the trajectory. For each PDE family and each experimental task we tune these schedules via grid search and report the best achieved performance in the main results. Apart from this choice of guidance weights we follow the implementation details of DiffusionPDE, including the finite-difference and finite-element stencils used to instantiate \(\mathcal{R}\) for each equation, and refer the reader to the official code release for the exact discretizations and hyperparameters: \url{https://github.com/jhhuangchloe/DiffusionPDE}.

\paragraph{D-Flow}

D-Flow~\citep{ben2024d} formulates constraint enforcement as an optimization problem posed directly on the initial noise \(\bm u_0\). Instead of modifying the sampling dynamics at intermediate times, the method searches for an initial condition whose deterministic flow trajectory simultaneously matches the observations and satisfies the governing PDE. Let \(\Phi_\theta \colon \bm u_0 \mapsto \bm u_1\) denote the deterministic flow map obtained by integrating the learned velocity field \(\bm v_\theta\) from \(t = 0\) to \(t = 1\). D-Flow determines the optimal initialization \(\bm u_0^\star\) by minimizing a joint objective defined on the terminal state:
\begin{equation}
\bm u_0^\star = \operatorname*{argmin}_{\bm u_0}
\Bigl\{
\|\mathcal{H}[\Phi_\theta(\bm u_0)] - \bm y\|_2^2
+
\gamma \|\mathcal{R}(\Phi_\theta(\bm u_0))\|_2^2
\Bigr\},
\end{equation}
where the first term enforces consistency with the observations \(\bm y\) under the observation operator \(\mathcal{H}\), and the second term penalizes violations of the PDE residual \(\mathcal{R}\) evaluated on the terminal field. The scalar hyperparameter \(\gamma\) balances the relative importance of data fidelity and physical consistency. Following the official implementation and the experimental protocol adopted in ECI~\citep{cheng2025gradientfree}, we approximate the continuous-time flow map \(\Phi_\theta\) by discretizing the ODE into \(S = 100\) explicit Euler steps for all tasks unless otherwise specified. Concretely, given a candidate initialization \(\bm u_0\), we construct the terminal state \(\Phi_\theta(\bm u_0)\) by repeatedly applying
\begin{equation}
\bm u_{s+1}
=
\bm u_s
+
\bm v_\theta(\bm u_s, t_s)\Delta t,
\qquad s = 0, \dots, S-1,
\end{equation}
with a uniform time grid \(0 = t_0 < t_1 < \dots < t_S = 1\) and \(\Delta t = 1/S\). The loss in the optimization is then computed on \(\bm u_S = \Phi_\theta(\bm u_0)\) by evaluating both the observation discrepancy \(\|\mathcal{H}(\bm u_S) - \bm y\|_2^2\) and the PDE residual norm \(\|\mathcal{R}(\bm u_S)\|_2^2\). We solve the resulting finite-dimensional optimization problem over \(\bm u_0\) with L-BFGS~\citep{liu1989limited}, using \(K = 20\) iterations and a learning rate of \(0.1\). Each L-BFGS iteration requires both a forward integration of the flow ODE to evaluate the objective and a corresponding backward pass to compute \(\nabla_{\bm u_0}\) of the loss. In practice, we obtain these gradients by differentiating through the full \(S\)-step trajectory using the adjoint-based ODE differentiation interface from \texttt{torchdiffeq}~\citep{chen2018neural}, which reconstructs the necessary intermediate states on the fly during the backward pass. This choice avoids storing all intermediate \(\bm u_s\) in memory, but it further increases the computational burden because each gradient evaluation entails re-integrating the ODE in reverse time. The overall cost of D-Flow scales linearly in both the number of Euler steps \(S\) used to discretize \(\Phi_\theta\) and the number of optimization iterations \(K\). Each optimization step invokes \(S\) model evaluations in the forward pass and an additional \(S\) evaluations in the adjoint-based backward pass, resulting in approximately \(2 S K\) evaluations of \(\bm v_\theta\) per sample. For simplicity we report this as \(\mathcal{O}(SK)\) model evaluations and, following the common convention in diffusion and flow-matching literature, we count them as \(S \times K\) number of function evaluations (NFE). For instance, setting \(S = 100\) and \(K = 20\) as in our Burgers' experiments leads to roughly \(2{,}000\) NFE per constrained sample, which is substantially more expensive than a single ProFlow trajectory under the same backbone model. For more implementation details, including the exact parameterization of \(\gamma\) and the stopping criteria, please refer to the public D-Flow notebook at
\url{https://colab.research.google.com/drive/1R6X3xSlb-mtV7QTaDEj6VtOf2GZyTF8i?usp=sharing#scrollTo=un_zmDDx3Ktu}.

\paragraph{PCFM}

In our implementation, we closely follow the official PCFM~\citep{utkarsh2025physics} codebase, which realizes Physics-Constrained Flow Matching as an explicit Euler integrator augmented with constraint corrections at every simulation step. Let \(\bm u_t\) denote the current flow-matching state and let \(\mathcal{C}(\bm u)\) be the constraint residual, for example the deviation from a prescribed integral conservation law or boundary condition. The corresponding constraint manifold is
\begin{equation}
\mathcal{M} = \{ \bm u \mid \mathcal{C}(\bm u) = \bm 0 \}.
\end{equation}
At each time step, we first take a standard flow-matching Euler update driven by the pretrained velocity field \(\bm v_\theta\),
\begin{equation}
\bm u_{t+\Delta t}^{\mathrm{euler}}
=
\bm u_t + \bm v_\theta(\bm u_t, t)\Delta t,
\end{equation}
which advances the sample along the generative trajectory without enforcing any constraints. We then apply a single Gauss--Newton correction step to project this intermediate state back toward the constraint manifold. Concretely, we linearize the constraint around \(\bm u_{t+\Delta t}^{\mathrm{euler}}\) using the Jacobian \(J_{\mathcal{C}}\) and solve a small linear system for a correction direction \(\delta \bm u\) that reduces \(\|\mathcal{C}(\bm u)\|_2\), then update
\begin{equation}
\bm u_{t+\Delta t}
\approx
\bm u_{t+\Delta t}^{\mathrm{euler}} + \delta \bm u.
\end{equation}
Following the PCFM sampling setup, we also enable the guided interpolation refinement used in their experiments. PCFM represents backward transport between a corrected terminal state and an earlier state using an optimal-transport displacement interpolant, and then refines points along this interpolant with a relaxed constraint correction step~\citep{utkarsh2025physics}. In this refinement, a penalty parameter \(\lambda\) balances staying close to the OT interpolant against reducing the constraint residual. We adopt the same configuration as the official implementation and set \(\lambda = 1.0\), use a gradient-based update with step size \(0.01\), and perform \(20\) refinement steps for each interpolated state. Intuitively, \(\lambda\) controls how strongly the refinement step pulls samples back toward the constraint manifold, the step size \(0.01\) determines how aggressively each refinement iteration moves in that direction, and the \(20\) refinement iterations allow the interpolant to converge to a state that both lies close to the flow-matching path and satisfies the imposed constraints to high accuracy. Finally, after completing all flow-matching steps, we follow PCFM and apply a last projection on the terminal state using the same Newton-style solver. This final correction enforces the constraints up to numerical precision while preserving the distribution learned by the pretrained flow model. For more implementation details, including the exact constraint definitions and solver configurations, please refer to the official PCFM repository at
\url{https://github.com/cpfpengfei/PCFM/tree/main}.

\section{Implementation details of ProFlow}
\label{sec:proflow-details}

ProFlow performs standard Euler discretization of the learned flow ODE combined with constrained proximal optimization at every timestep. We start each trajectory from Gaussian noise \(\bm u_0 \sim \mathcal{N}(\bm 0, \bm I)\) whose spatial resolution matches the target PDE field. At step \(i\) with current state \(\bm u_t\) at time \(t = i / N\), we first query the pretrained flow model to obtain a one step terminal prediction
\begin{equation}
\hat{\bm u}_1 = \bm u_t + (1-t)\bm v_\theta(\bm u_t, t)
\end{equation}
which we treat as a \emph{proximal anchor}. This prediction corresponds to the model's estimate of the clean field at \(t = 1\) given the current noisy state and plays the same role as the posterior mean predictor in diffusion based samplers.

Given \(\hat{\bm u}_1\), ProFlow solves a small proximal subproblem at each step. We introduce an observation mask \(\bm m\) and observations \(\bm c\) extracted from the training set, and define a joint objective on the terminal field
\begin{equation}
L(\bm u_1)
=
\|\bm u_1 - \hat{\bm u}_1\|_2^2
+
\lambda_{\mathrm{obs}}
\bigl\|
(\bm u_1 - \bm c) \odot \bm m
\bigr\|_2^2
+
\lambda_{\mathrm{pde}}
\bigl\|
\mathcal{R}(\bm u_1)
\bigr\|_2^2,
\end{equation}
where \(\odot\) denoted the entrywise multiplication. The first term keeps the refined field close to the flow prediction, which preserves the learned prior. The second term enforces data fidelity only on the observed region through the mask \(\bm m\). The third term penalizes the PDE residual \(\mathcal{R}(\bm u_1)\) evaluated using the same discretization as in the baseline methods. In our default configuration we use \(\lambda_{\mathrm{obs}} = 80\) and \(\lambda_{\mathrm{pde}} = 10^{-3}\) for the Darcy benchmark and reuse the same structure with appropriately tuned weights for other PDEs.

We minimize \(L(\bm u_1)\) using a customized first order iterative solver implemented directly on the terminal field. Starting from \(\bm u_1^{(0)} = \hat{\bm u}_1\), we perform \(K = 3\) gradient descent updates
\begin{equation}
\bm u_1^{(k+1)}
=
\bm u_1^{(k)}
-
\eta_t \nabla_{\bm u_1} L\bigl(\bm u_1^{(k)}\bigr),
\qquad k = 0,\dots,K-1,
\end{equation}
where \(\eta_t = \eta_0 s(t)\) is a time dependent step size given by a simple learning rate schedule that decreases as \(t\) approaches \(1\). In practice this schedule scales roughly with \(1 - t\), which allocates larger steps in the early noisy regime and smaller, more conservative updates near the clean terminal state. All gradients are computed with automatic differentiation, and we detach the refined field inside the inner loop to keep the computational graph local to each proximal step.

After \(K\) proximal iterations we obtain a refined terminal field \(\tilde{\bm u}_1\). Then we define the next state \(\bm u_{t'}\) (with \(t'=(i+1)/N\)) by mixing the refined data with fresh noise \(\bm\varepsilon\sim\mathcal{N}(\bm 0, \bm I)\). We apply the linear interpolation formula
\begin{equation}
\bm u_{t'} = (1-t') \bm\varepsilon + t' \bm u_1,
\end{equation}
which projects the iterate back onto the probability path used during training. This step prevents manifold divergence, ensuring the velocity network \(\bm v_\theta\) operates on valid samples during the subsequent integration step.

This procedure gives ProFlow a simple and efficient implementation. Each sampling step consists of one evaluation of the flow model to obtain \(\hat{\bm u}_1\) and the velocity field, a small number \(K = 3\) of proximal gradient updates on the terminal field, and one interpolation call that maps the refined terminal state back to the intermediate time. For detailed experimental settings and reproduction scripts, including the exact implementations of the observation mask, PDE residual operator, and learning rate schedule, the code is available from the authors upon reasonable request.
\end{document}